\DeclareMathAlphabet\mathbfcal{OMS}{cmsy}{b}{n}
\newtheorem{theorem}{Theorem}
\newtheorem{proposition}{Proposition}
\newtheorem{remark}{Remark}
\providecommand{\R}{\ensuremath \mathbb{R}}
\providecommand{\Z}{\ensuremath \mathbb{Z}}
\newcommand{\defeq}{\vcentcolon=}
\newcommand{\xf}{x^{\mathrm{f}}}
\newcommand{\yf}{y^{\mathrm{f}}}
\newcommand{\byf}{\mathbf{y^{\mathrm{f}}}}
\newcommand{\Bd}{\overline{B}}
\newcommand{\Cd}{\overline{C}}
\newcommand{\Sd}{S_d}
\newcommand{\sel}{S_{\mathrm{sel}}}
\newcommand{\Sx}{S_x}
\newcommand{\nd}{n_y}
\newcommand{\bu}{\mathbf{u}}
\newcommand{\ubar}{\bar{u}}
\newcommand{\bubar}{\mathbf{\bar{u}}}
\newcommand{\xhat}{\hat{x}}
\newcommand{\bxhat}{\mathbf{\hat{x}}}
\newcommand{\xbar}{\bar{x}}
\newcommand{\bxbar}{\mathbf{\bar{x}}}
\newcommand{\bd}{\mathbf{d}}
\newcommand{\dhat}{\hat{d}}
\newcommand{\bdhat}{\mathbf{\hat{d}}}
\title{\LARGE \bf Perfecting Periodic Trajectory Tracking: \\ Model Predictive Control with a Periodic Observer ($\Pi$-MPC)\vspace{-2mm}}
\author{Luis Pabon$^{1}$, Johannes K\"ohler$^{2}$, John Irvin Alora$^{1}$, Patrick Benito Eberhard$^{2}$, Andrea Carron$^{2}$, \\Melanie N. Zeilinger$^{2}$, Marco Pavone$^{1}$%
\thanks{$^{1}$Department of Aeronautics and Astronautics, Stanford University, Stanford, CA
        (e-mail: \{lpabon, jjalora, pavone\}@stanford.edu).}%
\thanks{$^{2}$Institute for Dynamic Systems and Control, ETH Zürich, Zürich CH-8092, Switzerland
        (e-mail: \{jkoehle, peberhard, carrona, mzeilinger\}@ethz.ch).}%
\thanks{Johannes K\"ohler was supported by the Swiss National Science Foundation under NCCR Automation (grant agreement 51NF40 180545).}%
\thanks{This work was supported by the Office of Naval Research under Grant N00014-23-S-B001.} %
}
\begin{document}

\makeatletter
\g@addto@macro\@maketitle{
  \captionsetup{type=figure}\setcounter{figure}{0}
  \def\mycolspace{1.2mm}
  \centering
    \includegraphics[width=0.95\textwidth]{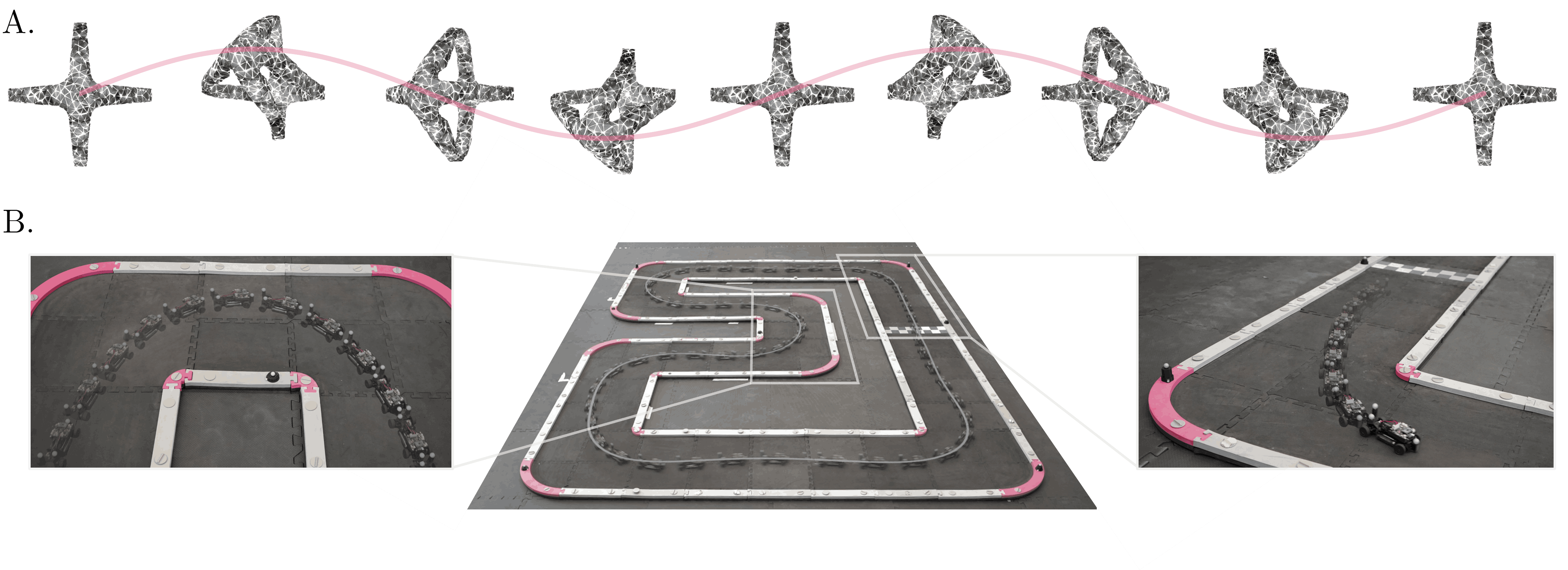} \vspace{-5mm}
    \captionof{figure}{We propose to augment MPC with a periodic observer to achieve (almost) perfect tracking despite significant model mismatch. 
    A: Sequential snapshots of a 9768-dim. Finite Element Method (FEM) simulation %
    of a ``Diamond" soft robot tracking a figure-8 with our approach. The tracking error decays to zero asymptotically despite significant model discrepancy and reduces below $1 \times 10^{-2}$ \si{mm} after 50 periods (25 seconds).
    B: Time-lapse view of a miniature race car following a reference trajectory using our proposed approach, which reduces the baseline MPC peak error from $14$ \si{cm} to $2.9$ \si{cm}.
	\label{fig:titlefig}}
 \vspace{-3.5mm}
}
\makeatother
\maketitle
\thispagestyle{empty}
\pagestyle{empty}

\begin{abstract}

In Model Predictive Control (MPC), discrepancies between the actual system and the predictive model can lead to substantial tracking errors and significantly degrade performance and reliability. While such discrepancies can be alleviated with more complex models, this often complicates controller design and implementation. By leveraging the fact that many trajectories of interest are periodic, we show that perfect tracking is possible when incorporating a simple observer that estimates and compensates for periodic disturbances. We present the design of the observer and the accompanying tracking MPC scheme, proving that their combination achieves zero tracking error asymptotically, regardless of the complexity of the unmodelled dynamics. We validate the effectiveness of our method, demonstrating asymptotically perfect tracking on a high-dimensional soft robot with nearly 10,000 states and a fivefold reduction in tracking errors compared to a baseline MPC on small-scale autonomous race car experiments.

\end{abstract}
\section*{Supplementary Material}
\noindent\textbf{Code: }\href{https://github.com/StanfordASL/Pi-MPC}{https://github.com/StanfordASL/Pi-MPC}

\noindent\textbf{Video: }\href{https://youtu.be/vBgiodXCQVQ}{https://youtu.be/vBgiodXCQVQ}

\section{Introduction}
Model Predictive Control (MPC)\cite{rawlings2017model} is a state-of-the-art method for reference tracking due to its effectiveness in enforcing safety constraints and handling nonlinear models. However, tracking performance is limited by the accuracy of the nominal model used to predict the actual system's behavior. This prediction model is typically an approximation of the true and unknown underlying dynamics. The resulting model mismatch can lead to significant tracking errors and makes achieving perfect tracking with MPC challenging, if not impossible.

Recent works have addressed this issue by learning the residual dynamics of a system with deep neural networks \cite{shi2019}, Gaussian processes \cite{kabzan2019, torrente2021}, or other data-driven methods. With enough data, these approaches can achieve high tracking accuracy. However, optimizing over complex, expressive, nonlinear models increases the computational burden and can complicate the controller design and implementation.

Repetitive tasks, and hence periodic trajectory tracking, play a crucial role across a broad spectrum of applications in robotics and control. Representative examples include legged locomotion \cite{holmes2006dynamics}, industrial manipulation \cite{cosner1990}, and autonomous racing \cite{romero2022}. %
Leveraging the periodic nature of these tasks presents an opportunity to achieve perfect tracking without the need for complex, data-driven models.

A similar observation has been made in the context of setpoint tracking. Offset-free MPC schemes \cite{badgwellDisturbanceModelDesign2002, pannocchia2003, maederLinearOffsetfreeModel2009, pannocchiaOffsetfreeMPCExplained2015} %
`learn' only what is necessary to achieve the control task. The key idea is to augment the model and use a disturbance observer that estimates a constant offset to account for steady-state error. Thus, despite using a simplified model, the MPC can achieve exact convergence -- but only to a desired setpoint.

\textbf{Statement of Contributions:} In this work, we propose an extension of offset-free MPC that asymptotically achieves zero tracking error for general periodic reference signals, i.e., perfect tracking, despite a large model mismatch. Our contributions are as follows:
\begin{enumerate}
    \item We present the design of a linear observer to estimate a periodic disturbance that captures model mismatch throughout the period. The observer ensures that the model's output predictions match measurements from the real system upon convergence.
    \item We incorporate these estimated disturbances in a simple tracking MPC and provide sufficient conditions to theoretically guarantee that the scheme achieves zero tracking error asymptotically. %
    \item While the initial presentation considers linear prediction models, we show how the method can be applied to nonlinear models and formulate a simple nonlinear MPC scheme that also achieves exact periodic tracking. 
    \item Lastly, we validate our approach through
    \begin{enumerate}
        \item Finite Element Method (FEM) simulations on a 9768-dim. soft robot using an MPC based on a learned 6-dim. linear model (Fig.~\ref{fig:titlefig}A),
        \item hardware experiments on a miniature race car using nonlinear MPC based on a simple kinematic bicycle model (Fig.~\ref{fig:titlefig}B).
    \end{enumerate}
    Despite the use of simple models, our method consistently achieves significantly lower tracking errors, reducing them by factors ranging from 4 to over 5000 compared to baseline MPC.
\end{enumerate}

\textbf{Related Work:} The field of control theory has extensively explored the estimation and rejection of periodic disturbances. Frameworks like Iterative Learning Control (ILC) \cite{ahn2007} and Repetitive Control (RC) \cite{li2004} can improve tracking accuracy by `learning' from past errors \cite{wang2009}. ILC is tailored for scenarios where systems undergo a state reset with each new operation cycle, whereas RC is designed for systems continuously transitioning across cycles.

Following the Internal Model Principle~\cite{francis1975}, RC incorporates a periodic signal generator in the controller, allowing it to reject periodic disturbances. %
Formulations combining RC and MPC were proposed in \cite{lee2001, cao2009}, showing success on periodically time-varying linear systems. However, RC and ILC directly utilize the measured error from the last cycle to update the prediction model, which can lead to poor performance due to non-repeating errors, e.g., from measurement noise~\cite{li2021performance}.

In contrast, offset-free MPC methods \cite{badgwellDisturbanceModelDesign2002, pannocchia2003, maederLinearOffsetfreeModel2009, pannocchiaOffsetfreeMPCExplained2015} avoid such pitfalls by using a more general disturbance observer to filter deterministic disturbances caused by model mismatch. The design ensures zero steady-state error and can balance noise suppression and convergence rate by tuning the observer. While this method is successfully used in many implementations \cite{carron2019data, chen2022}, its focus is mainly on setpoint tracking.

From a technical perspective, our work is related to \cite{maederOffsetfreeReferenceTracking2010}, which generalizes offset-free MPC methods to references generated by arbitrary, unstable dynamics. By focusing on periodic problems, we can provide a simpler parametrization and design for the observer (cf. \eqref{eq:LDO} and \cite[Eq. (12)]{maederOffsetfreeReferenceTracking2010}) and an effective MPC design for nonlinear systems (Sec.~\ref{sec-nonlin}). The problem of periodic optimal control with inexact models is also addressed in \cite{mirasierra2023} using periodic disturbance observers and modifier adaptation. However, this implementation utilizes knowledge of gradients of the actual system.

Our approach merges the principles of disturbance observers and repetitive control. By augmenting the MPC nominal model with a lifted
periodic disturbance, our approach extends RC to nonlinear systems with constraints. Using an observer (instead of direct updates) allows users to balance noise reduction and convergence speed. By incorporating offset-free MPC techniques and design principles into RC, we ensure perfect asymptotic tracking of periodic signals despite model mismatches.

\textbf{Outline:} 
We begin by describing the problem setup (Sec.~\ref{sec:setup}). 
Then, we present the proposed periodic disturbance observer (Sec.~\ref{sec:periodic_observer}) and the corresponding periodic tracking MPC (Sec.~\ref{sec:MPC}), including convergence guarantees (Thm.~\ref{thm:offsetfree}). 
While this exposition considers linear prediction models for simplicity, we also discuss how the method naturally generalizes to nonlinear prediction models (Sec.~\ref{sec-nonlin}). Lastly, we provide results from simulation and hardware experiments (Sec.~\ref{sec:experiments}) and present our conclusions (Sec.~\ref{sec:conclusion}).

\section{Problem Setup}\label{sec:setup}
\textbf{Notation:}
We denote the quadratic norm with respect to a positive definite matrix $Q = Q^\top$ by $\| x \|_Q^2\defeq x^\top Qx$. Non-negative integers are denoted by $\Z_{\geq 0}$, positive integers by $\Z_{>0}$, and integers in the interval $[a,b]$ by $\Z_{[a,b]}$. The $n \times n$ identity matrix is denoted by $I_n$. The spectrum of a matrix $A$ is denoted by $\sigma(A)$. The Kronecker product between matrices $A$ and $B$ is denoted by $A \otimes B$.

We consider a discrete-time nonlinear system
\begin{align} \label{eq:plant}
\begin{split}
& \xf(t+1)=f^{\mathrm{f}}\left(\xf(t), u(t)\right),\\
& \yf(t)=g^{\mathrm{f}}\left(\xf(t)\right), \\
& z(t) = H\yf(t),
\end{split}
\end{align}
where $\xf \in \R^{n}$ represents the system state, $u \in \mathbb{R}^{n_u}$ the control input, and $\yf \in \mathbb{R}^{n_y}$ the output measured at each time $t \in \Z_{\geq 0}$. The functions $f^{\mathrm{f}}$ and $g^{\mathrm{f}}$ are assumed to be unknown. %
The controlled variable, $z \in \mathbb{R}^{n_{r}}$, is a linear combination of the measured outputs where, without loss of generality, we assume $H$ has full row rank ($n_r \leq n_y$). 

The primary objective for the controlled variable $z$ is to track a periodic reference signal $r(t) \in \R^{n_r}$ for all time $t \in \Z_{\geq 0}$. We denote the reference period as $N \in \Z_{>0}$ such that periodicity of $r$ implies $r(t+N)=r(t)$.

We consider a linear time-invariant (LTI) nominal model %
\begin{align} \label{eq:nominal}
\begin{split}
    x(t+1) = A x(t) + B u(t), \; \; y(t) = C x(t),
\end{split}
\end{align}
with output $y \in \mathbb{R}^{n_y}$ and state $x \in \mathbb{R}^{n_x}$, allowing for $n_x$ to be different from $n$. We assume that $(A, B)$ is controllable, $(A, C)$ is observable, and $C$ has full row rank. The model is subject to the constraints
$x(t) \in \mathcal{X}, \; u(t) \in \mathcal{U}, \; \forall t \in \Z_{\geq 0},$
where the sets $\mathcal{X}$ and $\mathcal{U}$ are assumed to be compact.

The goal is to design an MPC scheme where the controlled variable asymptotically converges to the periodic ref., i.e.,
    $$\lim _{t \rightarrow \infty}\|z(t) - r(t)\|=0.$$
Hence, we assume there exist control inputs such that the system \eqref{eq:plant} can track the reference while satisfying constraints.

To this end, we design a linear observer that estimates periodic disturbances (Sec.~\ref{sec:periodic_observer}). We then combine it with a tracking MPC formulation and establish convergence guarantees (Sec.~\ref{sec:MPC}). While we initially consider an LTI model to streamline the exposition, we also extend the method for application with nonlinear models (Sec.~\ref{sec-nonlin}).

\section{Periodic Disturbance Observer Design}
\label{sec:periodic_observer}
In this section, we introduce a simple linear observer~\eqref{eq:estimator} to estimate periodic disturbances. These estimated disturbances should compensate for the deterministic model mismatch. In particular, we first present an augmented model \eqref{eq:LDO}, discuss its observability (Prop.~\ref{prop:augObs}), and end by characterizing the observer's convergence properties (Prop.~\ref{prop:steadyState}).

To capture the model mismatch of the true system \eqref{eq:plant} with respect to the nominal model \eqref{eq:nominal} throughout the period $N$, we estimate a `lifted' disturbance $\bd \in \R^{\nd N}$. Specifically, the lifted disturbance corresponds to $N$ disturbances 
\begin{equation} \label{eq:Ndists}
    \bd(t) = \begin{bmatrix} {d_{0}(t)}^\top & {d_1(t)}^\top & \cdots & {d_{N-1}(t)}^\top \end{bmatrix}^\top,
\end{equation} where each $d_{k}(t) \in \mathbb{R}^{\nd}$ represents the disturbance prediction computed at time $t$ for the expected disturbance at $k$ time steps in the future, for $t \in \Z_{\geq 0}, \, k\in \Z_{[0, N - 1]}$. 

Our goal will be to augment the nominal model \eqref{eq:nominal} with these periodic disturbances in such a way that the augmented model is observable, i.e., we can estimate both the state and disturbance vectors online. For this, we introduce the matrices $\Bd \in \R^{n_x \times \nd}$ and $\Cd \in \R^{n_y \times \nd}$ as design choices for how the disturbances should act on the state and output, respectively. Augmenting the nominal model \eqref{eq:nominal} with the lifted disturbance \eqref{eq:Ndists} yields
\begin{align} \label{eq:LDO}
\begin{split} 
    \begin{bmatrix}
    x(t+1) \\
    \bd(t+1)
    \end{bmatrix} = \begin{bmatrix}
    A & \Bd \sel \\
    0 & \Sd
    \end{bmatrix} &\begin{bmatrix}
    x(t) \\
    \bd(t)
    \end{bmatrix} + \begin{bmatrix}
    B \\
    0
    \end{bmatrix} u(t),\\
    y(t) = \begin{bmatrix}
        C & \Cd \sel
    \end{bmatrix} &\begin{bmatrix}
        x(t) \\
        \bd(t)
    \end{bmatrix} ,
\end{split}
\end{align}
where $\sel = \begin{bmatrix} I_{\nd} & 0 & \cdots & 0  \end{bmatrix} \in \mathbb{R}^{\nd \times \nd N}$ is a selection matrix that picks out the current (first) disturbance, and $\Sd$ advances the disturbance prediction by one time step using the cyclic forward shift matrix $S \in \mathbb{R}^{N \times N}$, defined as
\begin{equation} \label{eq:sd}
    S= \begin{bmatrix}
    0 & 1 & 0 & 0 \\
    \vspace{-6.5mm} \\ %
    \vdots & 0 & \ddots & 0 \\
    0 & 0 & 0 & 1 \\
    1 & 0 & \cdots & 0\\
    \end{bmatrix},
    \quad 
    \Sd = S \otimes I_{\nd}.
\end{equation}
In particular, we have that $\Sd^N = I_{\nd N}$. Due to the block structure of matrix $\Sd$, all of its eigenvalues $\lambda_k=e^{i 2 \pi k / N}, \, k\in \mathbb{Z}_{[0, N - 1]}$ lie on the unit circle, i.e., $\vert \lambda\vert =1$, and have algebraic and geometric multiplicity of $\nd$. The case $N=1$ corresponds to a constant disturbance, which recovers the offset-free MPC disturbance model \cite{badgwellDisturbanceModelDesign2002, pannocchia2003, maederLinearOffsetfreeModel2009, pannocchiaOffsetfreeMPCExplained2015}.

Next, we design an observer to estimate the state $x$ and the disturbance $\bd$ of the augmented model \eqref{eq:LDO}.
The following proposition clarifies when this model is observable.
\begin{proposition} \label{prop:augObs}
The augmented system \eqref{eq:LDO} is observable if and only if %
\begin{equation} \label{eq:obsRank}
\operatorname{rank}  \begin{bmatrix}
    A - \lambda I_{n_x} & \Bd \\
    C & \Cd
\end{bmatrix}  = n_x + \nd, \quad \text{for all } \lambda \in \sigma(\Sd).
\end{equation}
\end{proposition}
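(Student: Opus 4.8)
The plan is to prove this as a specialization of the Popov--Belevitch--Hautus (PBH) eigenvector test to the structured augmented pair. Write the augmented dynamics and output matrices as $\mathcal{A}=\begin{bmatrix} A & \Bd\sel \\ 0 & \Sd\end{bmatrix}$ and $\mathcal{C}=\begin{bmatrix} C & \Cd\sel\end{bmatrix}$. Since $\mathcal{A}$ is block upper-triangular, its spectrum is $\sigma(\mathcal{A})=\sigma(A)\cup\sigma(\Sd)$. By the PBH test, observability of $(\mathcal{A},\mathcal{C})$ is equivalent to $\begin{bmatrix}\mathcal{A}-\mu I_{n_x+\nd N}\\ \mathcal{C}\end{bmatrix}$ having full column rank for every $\mu\in\sigma(\mathcal{A})$, i.e.\ to the statement that the only pair $(v,w)\in\mathbb{C}^{n_x}\times\mathbb{C}^{\nd N}$ solving
\begin{equation}\label{eq:pbhkernel}
(A-\mu I_{n_x})v+\Bd\sel w=0,\quad (\Sd-\mu I_{\nd N})w=0,\quad Cv+\Cd\sel w=0
\end{equation}
is the trivial one, for all $\mu\in\sigma(\mathcal{A})$. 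The first step is thus to set up \eqref{eq:pbhkernel} and then split the analysis according to whether $\mu$ lies in $\sigma(\Sd)$.

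For $\mu\in\sigma(A)\setminus\sigma(\Sd)$, the matrix $\Sd-\mu I_{\nd N}$ is invertible, so the second equation in \eqref{eq:pbhkernel} forces $w=0$; the remaining relations reduce to $(A-\mu I_{n_x})v=0,\ Cv=0$, whose only solution is $v=0$ precisely because $(A,C)$ is observable (our standing assumption). Hence these eigenvalues impose no additional requirement, which is why the final condition ranges only over $\sigma(\Sd)$. The crux is the case $\mu=\lambda\in\sigma(\Sd)$. Here I would characterize the null space of $\Sd-\lambda I$ via the eigenstructure of the circulant shift $S$: each $\lambda=e^{i2\pi k/N}$ is a simple eigenvalue of $S$ with eigenvector $v_\lambda=\begin{bmatrix}1 & \lambda & \cdots & \lambda^{N-1}\end{bmatrix}^\top$, and since $S$ (hence $\Sd=S\otimes I_{\nd}$) is diagonalizable, the null space of $\Sd-\lambda I_{\nd N}$ is exactly the $\nd$-dimensional eigenspace $\{\,v_\lambda\otimes\xi:\xi\in\mathbb{C}^{\nd}\,\}$. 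The key computation is that the selection matrix acts as $\sel\,(v_\lambda\otimes\xi)=\xi$, because the leading entry of $v_\lambda$ is $1$; substituting $w=v_\lambda\otimes\xi$ into \eqref{eq:pbhkernel} collapses the $(\nd N)$-dimensional condition to the $\nd$-dimensional one
\begin{equation}\label{eq:smallkernel}
\begin{bmatrix} A-\lambda I_{n_x} & \Bd \\ C & \Cd \end{bmatrix}\begin{bmatrix} v \\ \xi \end{bmatrix}=0,
\end{equation}
with $w=0$ iff $\xi=0$ (as $v_\lambda\neq 0$). Thus triviality of the kernel at $\mu=\lambda$ is equivalent to \eqref{eq:smallkernel} having only the trivial solution, i.e.\ to the matrix in \eqref{eq:obsRank} having full column rank $n_x+\nd$.

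Combining the two cases, observability of \eqref{eq:LDO} holds if and only if the rank condition \eqref{eq:obsRank} is met for every $\lambda\in\sigma(\Sd)$, completing the equivalence. I expect the main obstacle to be the middle step: cleanly justifying that the null space of $\Sd-\lambda I$ is the full eigenspace (not a larger generalized eigenspace), which hinges on the diagonalizability of $S$ and the stated geometric multiplicity $\nd$, and then verifying the identity $\sel\,(v_\lambda\otimes\xi)=\xi$ so that the lifted disturbance correctly contracts onto a single $\nd$-block. Everything else is routine once this reduction is in place, with the only other point of care being the explicit invocation of the $(A,C)$ observability assumption to discharge the eigenvalues in $\sigma(A)\setminus\sigma(\Sd)$.
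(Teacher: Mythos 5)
Your proof is correct and follows essentially the same route as the paper's: the Hautus/PBH test applied to the block-triangular augmented pair, with the identical case split on whether $\lambda \in \sigma(\Sd)$ and the same use of observability of $(A,C)$ to discharge eigenvalues outside $\sigma(\Sd)$. The only difference is one of execution: where the paper counts and ``removes'' the $\nd(N-1)$ independent columns of $\Sd - \lambda I_{\nd N}$ and then discards the zero columns induced by $\sel$, you characterize the null space explicitly as $\{v_\lambda \otimes \xi\}$ and verify $\sel(v_\lambda \otimes \xi)=\xi$, which makes that reduction to the $(n_x+\nd)$-column condition fully rigorous.
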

\begin{proof}
The proof is provided in the appendix. 
\end{proof}

When Prop.~\ref{prop:augObs} holds, i.e., \eqref{eq:LDO} is observable, the periodic disturbance $\bd$ and the state $x$ can be uniquely reconstructed from a trajectory of $y$ and $u$. This ensures that a linear observer can be designed to estimate $\bd$, $x$. In turn, these estimates will be used in the ensuing predictions of the model to enable the estimated output to converge to the true output. %

Therefore, we must choose $\Bd,\Cd$ such that the observability condition~\eqref{eq:obsRank} holds. The following remark discusses how to select disturbance models to satisfy this condition. 
\begin{remark} \label{rmk:commonDO}
Suppose for simplicity that the eigenvalues of $A$ and $\Sd$ are distinct\footnote{Disjoint spectra $\sigma(A) \cap \sigma(\Sd) = \emptyset$ are expected in general for random matrices $A$. Otherwise, $\Bd$ should be chosen such that $A-\Bd C$ has distinct eigenvalues from $\Sd$, e.g., using pole-placement.}. 
Then a simple choice of a disturbance model consists of an output disturbance, $\Bd=0$, $\Cd=I_{\nd}$, which satisfies condition~\eqref{eq:obsRank} (cf. \cite[Remark~2]{maederLinearOffsetfreeModel2009}). Alternatively, a pure input disturbance can be modeled by choosing $\Cd=0$, and condition~\eqref{eq:obsRank} reduces to choosing $\Bd$ such that $\det(C(A - \lambda I_{n_x})^{-1}\Bd)\neq 0$ $\forall \lambda\in\sigma(\Sd)$.

The special case of full state measurement, i.e., $C=I_{n_x}$, enables a simple design using %
$\Bd=I_{n_x},\, \Cd=0$, which can even be directly applied to nonlinear models -- see Sec.~\ref{sec-nonlin}.
\end{remark} More guidelines and existing results for the choice of a disturbance model can be found in~\cite{badgwellDisturbanceModelDesign2002,pannocchia2003,maederLinearOffsetfreeModel2009}.

To estimate the state and disturbance vectors online, we design a simple Luenberger observer:
\begin{align} \label{eq:estimator}
\begin{split}
\begin{bmatrix}
\xhat(t+1) \\
\bdhat(t+1)
\end{bmatrix} &= \begin{bmatrix}
A & \Bd \sel \\
0 & \Sd
\end{bmatrix} \begin{bmatrix}
\xhat(t) \\
\bdhat(t)
\end{bmatrix} + \begin{bmatrix}
B \\
0
\end{bmatrix} u(t)\\
& + \begin{bmatrix}
L_x \\
L_d
\end{bmatrix} \left(-\yf(t) + C \xhat(t) + \Cd \sel \bdhat(t) \right).
\end{split}
\end{align}
Given observability, %
we can design a stable estimator \eqref{eq:estimator} using standard techniques, e.g., pole placement or Kalman filtering. The design of $L_x, L_d$ allows users to balance noise reduction against faster estimator convergence.%

When the input and output signals become periodic\footnote{This behavior is expected in cases where the MPC yields bounded closed-loop trajectories (see Sec.~\ref{sec:experiments}).}, the observer converges to a periodic trajectory, characterized in the following proposition.%
\begin{proposition} \label{prop:steadyState}
Suppose the input and output signal are asymptotically $N$-periodic, i.e., $u(t+N) = u(t)$ and $y(t+N) = y(t)$ for $t\rightarrow\infty$. Then, the estimator \eqref{eq:estimator} converges to periodic trajectories $\bxhat$, $\bdhat$ that satisfy
\begin{align} \label{eq:ObsSSR}
\begin{bmatrix}
    A_N - \Sx & B_N \\
    C_N & 0 
\end{bmatrix}
\begin{bmatrix}
    \bxhat(t) \\
    \bu(t)
\end{bmatrix} = \begin{bmatrix}
    - {\Bd}_N \bdhat (t)\\
    \byf(t) -  {\Cd}_N \bdhat(t)
\end{bmatrix},
\end{align}
where we denote $M_N \defeq I_N \otimes M$ for any matrix $M$, define $\Sx \defeq S \otimes I_{n_x}$ as the block-cyclic permutation matrix, and introduce $u_k(t) \defeq u(t+k)$ and $\yf_k(t) \defeq \yf(t+k)$ to express the periodic trajectories in the limit $t \to \infty$ as
\begin{align*}
    \bu(t) &= \begin{bmatrix} u_0(t)^\top & u_{1}(t)^\top & \cdots & u_{N-1}(t)^\top \end{bmatrix}^\top,\\
    \byf(t) &= \begin{bmatrix} {\yf_0}(t)^\top & {\yf_{1}}(t)^\top & \cdots & {\yf_{N-1}}(t)^\top \end{bmatrix}^\top,\\
    \bxhat(t) &= \begin{bmatrix} {\xhat_0}(t)^\top & {\xhat_{1}}(t)^\top & \cdots & {\xhat_{N-1}}(t)^\top \end{bmatrix}^\top.
\end{align*}
\end{proposition}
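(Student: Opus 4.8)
The plan is to first show that the estimator \eqref{eq:estimator} admits a well-defined $N$-periodic limit, and then to characterize that limit by stacking the estimator equations over one period, reducing \eqref{eq:ObsSSR} to the statement that the estimator attains \emph{zero innovation} in steady state. Writing \eqref{eq:estimator} compactly as $\hat\xi(t+1) = \mathcal A_{\mathrm{cl}}\,\hat\xi(t) + \mathcal B_0\, u(t) - L\,\yf(t)$ with $\hat\xi = [\xhat^\top,\bdhat^\top]^\top$, $\mathcal C_0 = [\,C\;\;\Cd\sel\,]$, $L = [L_x^\top, L_d^\top]^\top$, and $\mathcal A_{\mathrm{cl}} = \mathcal A_0 + L\,\mathcal C_0$ (where $\mathcal A_0,\mathcal B_0$ are the system matrices in \eqref{eq:estimator}), the gains $L_x,L_d$ are chosen using observability (Prop.~\ref{prop:augObs}) so that $\mathcal A_{\mathrm{cl}}$ is Schur. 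Since $(u,\yf)$ is asymptotically $N$-periodic and $\mathcal A_{\mathrm{cl}}$ is Schur, $I - \mathcal A_{\mathrm{cl}}^{N}$ is invertible, so the recursion has a unique $N$-periodic solution $\hat\xi^\star$; decomposing the forcing into its periodic part and a decaying remainder shows $\hat\xi(t)\to\hat\xi^\star(t)$, which furnishes the periodic limits $\bxhat,\bdhat$.

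Next I would stack the three periodic relations — the state update, the disturbance update \eqref{eq:sd}, and the innovation $e(t)\defeq C\xhat(t)+\Cd\sel\bdhat(t)-\yf(t)$ — over the $N$ phases $t,\dots,t+N-1$. Using $\xhat(t+N)=\xhat(t)$, the stacked state update reads $(A_N-\Sx)\bxhat + B_N\bu + {\Bd}_N\,\bm{\delta} + (I_N\otimes L_x)\,\mathbf{e} = 0$, and the stacked innovation reads $\mathbf{e} = C_N\bxhat + {\Cd}_N\,\bm{\delta} - \byf$, where $\bm{\delta}$ collects the current-disturbance blocks $\sel\bdhat(t+k)$ and $\mathbf{e}$ stacks $e(t),\dots,e(t+N-1)$. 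Comparing with \eqref{eq:ObsSSR}, these two stacked identities coincide with the two block rows of \eqref{eq:ObsSSR} (with the $\bdhat$ there identified as $\bm{\delta}$) precisely when $\mathbf{e}=0$; moreover $\Sd^N=I$ and the cyclic structure of $\Sd$ give $\sel\Sd^k\bdhat(t)=\dhat_k(t)$, so that once $\mathbf{e}=0$ one has $\bm{\delta}=\bdhat(t)$ exactly. Thus the proposition reduces to establishing zero innovation, $\mathbf{e}\equiv 0$.

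The crux, and the main obstacle, is this zero-innovation claim. The disturbance recursion $\bdhat(t+1)=\Sd\bdhat(t)+L_d\,e(t)$ is only marginally stable ($\sigma(\Sd)$ lies on the unit circle with $\Sd^N=I$), so iterating over one period yields only $\sum_{j=0}^{N-1}\Sd^{N-1-j}L_d\,e(t+j)=0$, which on its own does not force $\mathbf{e}\equiv 0$. To close this I would lift the estimator to the once-per-period (super-step) timescale: because $\Sd^N=I$, the lifted disturbance is \emph{constant} across super-steps, so the lifted augmented model is exactly an offset-free disturbance model with a constant disturbance of dimension $\nd N$ equal to the stacked output dimension $n_y N$ (recall $\nd=n_y$). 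Detectability of the lifted model follows from the rank condition \eqref{eq:obsRank} over $\sigma(\Sd)$ (equivalently Prop.~\ref{prop:augObs}), and the lifted observer inherits Schur stability from $\mathcal A_{\mathrm{cl}}$. The standard offset-free argument then applies: stability together with a square, detectable disturbance injection forces the lifted innovation gain to be full rank, so the steady-state relation reduces to $\mathbf{e}=0$. Equivalently, the $N$ marginal modes of $\Sd$ at the frequencies $e^{i2\pi k/N}$ serve as internal models for every harmonic of the $N$-periodic signals, so a stable estimator annihilates all harmonics of the innovation.

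With $\mathbf{e}\equiv 0$ in hand, the two stacked identities from the second paragraph become exactly the two block rows of \eqref{eq:ObsSSR}, completing the proof. I expect the delicate point to be rigorously justifying the passage from the marginal single-period relation to $\mathbf{e}\equiv 0$ — i.e.\ verifying that the full-rank/detectability hypotheses of the offset-free lemma survive the lifting — rather than the (routine) convergence and stacking steps.
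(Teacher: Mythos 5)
Your proposal is correct, and its skeleton coincides with the paper's proof: establish convergence of the estimator to an $N$-periodic trajectory, stack the estimator equations over one period, reduce \eqref{eq:ObsSSR} to the zero-innovation claim $\mathbf{e}\equiv 0$, and obtain that claim from $\bigl[\Sd^{N-1}L_d\ \cdots\ \Sd L_d\ \ L_d\bigr]\mathbf{e}=0$ together with full rank of this (square, since $\nd=n_y$) matrix. Where you genuinely diverge is in how that full-rank fact is justified. The paper proves it per-step (Prop.~\ref{prop:rankObs} in the appendix): Schur stability of the closed-loop observer matrix forbids eigenvalues at any $\lambda\in\sigma(\Sd)$ (all on the unit circle), and extracting the bottom block rows and using full row rank of $C$ gives $\operatorname{rank}\bigl[\,L_d\ \ \Sd-\lambda I_{\nd N}\,\bigr]=\nd N$ for all $\lambda\in\sigma(\Sd)$, i.e.\ controllability of $(\Sd,L_d)$ by Hautus. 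You instead lift to the once-per-period timescale, where $\Sd^N=I_{\nd N}$ makes the disturbance block constant and its lifted update reads $\bdhat(t+N)=\bdhat(t)+\mathcal{C}\mathbf{e}$ with $\mathcal{C}=\bigl[\Sd^{N-1}L_d\ \cdots\ L_d\bigr]$, so the classical constant-disturbance offset-free argument at $\lambda=1$ applies. Your route reduces the periodic case to a known result and gives a clean interpretation (the $N$ marginal modes of $\Sd$ are internal models for every harmonic); the paper's route yields a self-contained per-step lemma without constructing the lifted system. The ``delicate point'' you flag does go through: the disturbance rows of $\mathcal{A}_{\mathrm{cl}}^N-I$ equal $\mathcal{C}\,\bigl[\mathcal{C}_0^\top\ (\mathcal{C}_0\mathcal{A}_{\mathrm{cl}})^\top\ \cdots\ (\mathcal{C}_0\mathcal{A}_{\mathrm{cl}}^{N-1})^\top\bigr]^\top$, and nonsingularity of $\mathcal{A}_{\mathrm{cl}}^N-I$ (from Schur stability of $\mathcal{A}_{\mathrm{cl}}$) then forces the square matrix $\mathcal{C}$ to be nonsingular -- no detectability of the lifted model is needed, so that hypothesis in your third paragraph is superfluous. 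Finally, your explicit handling of $\bm{\delta}$ versus $\bdhat(t)$ (identifying them only after $\mathbf{e}=0$, via $\sel\Sd^k\bdhat(t)=\dhat_k(t)$) makes precise a step the paper performs implicitly when combining \eqref{eq:zero_err} and \eqref{eq:per_state}, and is worth retaining.
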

\begin{proof}
Periodic input and output with a stable observer~\eqref{eq:estimator} implies that $\xhat$ and $\bdhat$ asymptotically converge to a periodic trajectory with the same period \cite{haddleton1994}. Thus, as $t \to \infty$, we have $\bdhat(t+N) = \bdhat(t) = \Sd^N \bdhat(t)$. 

Defining $e(t) = -\yf(t) + C \xhat(t) + \Cd  \dhat_0(t)$ and focusing on the bottom row of~\eqref{eq:estimator} we obtain
\begin{align*}
    &\bdhat(t+N) = \Sd^N \bdhat(t) + \sum_{j=0}^{N-1} \Sd^{N-1 - j} L_d e(t+j)\\
   &\iff 0=   \begin{bmatrix}
\Sd^{N-1} L_d &  \cdots & \Sd L_d & L_d
   \end{bmatrix}    
   \begin{bmatrix}
       e(t)\\
       \vdots \\
       e(t+N-1)
   \end{bmatrix}.
\end{align*}
The $N \nd \times N n_y$ matrix above is the controllability matrix for $(\Sd, L_d)$, which has full rank since the observer~\eqref{eq:estimator} is stable -- see Prop.~\ref{prop:rankObs} in the appendix. Inverting, we have
\begin{align}
0 &= e(i) = -\yf(i) + C \xhat(i) + \Cd \dhat(i), \; \forall i \in \Z_{[t, t+N-1]}. \label{eq:zero_err}
\end{align}Substituting \eqref{eq:zero_err} into the top row of \eqref{eq:estimator}, we obtain
\begin{align} \label{eq:per_state}
\xhat(i+1) = A \xhat(i) + \Bd \dhat(i) + B u(i).
\end{align}
Combining \eqref{eq:zero_err} and \eqref{eq:per_state} leads to \eqref{eq:ObsSSR}. 
\qedhere
\end{proof}

Prop.~\ref{prop:steadyState} generalizes the theoretical results in \cite[Prop.~3]{maederLinearOffsetfreeModel2009} and allows us to provide convergence guarantees for the MPC in the next section.

\section{Periodic Model Predictive Control ($\Pi$-MPC)} \label{sec:MPC}
We now present an MPC scheme that leverages the observer \eqref{eq:estimator} to asymptotically achieve zero tracking error for periodic references. The observer provides estimates $\bdhat$ that are used to compute targets $\bxbar$, $\bubar$ \eqref{eq:ref_sel} for the state and input, respectively. The MPC problem \eqref{eq:LDOMPC} is then formulated to minimize deviations from these targets, ensuring the estimated controlled variable converges to the reference.

\subsection{Target computation}
We compute the state and input targets
\begin{align*}
    \bxbar(t) &= \begin{bmatrix} \xbar_{0}(t)^\top & \xbar_{1}(t)^\top & \cdots & \xbar_{N-1}(t)^\top \end{bmatrix}^\top, \\
    \bubar(t) &= \begin{bmatrix} \ubar_{0}(t)^\top & \ubar_{1}(t)^\top & \cdots & \ubar_{N-1}(t)^\top \end{bmatrix}^\top,
\end{align*}
at time $t$ using%
\begin{equation} \label{eq:ref_sel}
\begin{bmatrix}
    A_N - \Sx & B_N \\
    H_N C_N & 0 
\end{bmatrix}
\begin{bmatrix}
    \bxbar(t) \\
    \bubar(t)
\end{bmatrix} = \begin{bmatrix}
    - {\Bd}_N \bdhat(t)\\
    \mathbf{r}(t) -  H_N {\Cd}_N \bdhat(t)
\end{bmatrix},
\end{equation}
where $H_N = I_N \otimes H$. The targets correspond to the trajectory $\bxbar, \bubar$ that achieves reference tracking for a given disturbance estimate $\bdhat(t)$, analogous to \eqref{eq:ObsSSR} in Prop.~\ref{prop:steadyState}. 

Consequently, we assume that
\begin{equation} \label{eq:wellposed}
    \operatorname{rank} \begin{bmatrix}
    A - \lambda I_{n_x} & B \\
    HC & 0
\end{bmatrix}= n_x + n_r, \quad \forall \lambda \in \sigma(\Sd).
\end{equation}
This condition\footnote{Condition \eqref{eq:wellposed} implies $n_r \leq n_u$. When $n_r < n_u$, \eqref{eq:ref_sel} is under-determined and we use the minimal norm solution.}
ensures that the target computation~\eqref{eq:ref_sel} is feasible for any disturbance estimate $\bdhat$ and any reference $\mathbf{r}$, see~\cite{maederOffsetfreeReferenceTracking2010}. 
Condition~\eqref{eq:wellposed} requires the transmission zeros from $u$ to $z$ to be distinct from $\sigma(\Sd)$, which holds generically for random matrices and is a necessary condition for tracking and disturbance rejection for the LTI system~~\cite[Lemma~1]{davison1976}.

Having to compute the targets $\bxbar(t), \bubar(t)$ at each time step may be computationally expensive or undesirable in practice. This limitation can be addressed using the alternative $\Pi$-MPC formulation \eqref{eq:MPC_alternative_nonlinear} discussed in Sec.~\ref{sec-nonlin}. 

\subsection{$\Pi$-MPC formulation}
We now formulate the MPC with horizon length $L$ as 
\begin{subequations} \label{eq:LDOMPC}
\begin{align}
\min_{u_0, \ldots, u_{L-1}} \,& 
\| x_L - \xbar_{L}(t) \|_P^2\\
&+ \sum_{k=0}^{L-1} \| x_{k} - \xbar_{k}(t) \|_Q^2 
+ \| u_{k} - \ubar_{k}(t) \|_R^2 \nonumber\\
\text{s.t.} \quad & x_{k+1} = A x_{k} + B u_{k} + \Bd \dhat_k (t), \quad k \in \Z_{[0,L-1]}, \nonumber\\
& x_0 = \hat{x}(t), \; x_{k} \in \mathcal{X}, \; k \in \Z_{[1,L]}, \label{xconst}\\
& u_{k} \in \mathcal{U}, \quad k \in \Z_{[0,L-1]}, \label{uconst}
\end{align}
\end{subequations}
where $Q \succeq 0$, $R \succ 0$, $(A, Q)$ is detectable, the terminal cost $P$ is chosen using a linear quadratic regulator (LQR), and we assume $L<N$ for simplicity\footnote{Otherwise we repeat the periodic targets to fill the horizon length.}. %
At each time $t$, the observer provides the state estimate $\xhat(t)$ and the estimates $\dhat_k (t)$ for the expected disturbance $k$ time steps ahead. We denote the optimal solution to \eqref{eq:LDOMPC} with a star ($^\star$).

The following algorithm recaps the offline design of the periodic MPC scheme ($\Pi$-MPC), which includes the disturbance model, the observer, and the LQR design. 

\begin{algorithm}[H]
\caption{$\Pi$-MPC (Offline Design)}
\begin{algorithmic}
    \State Given LTI model \eqref{eq:nominal} and reference period $N$
    \State Choose $\Bd, \Cd$ such that \eqref{eq:obsRank} holds
    \State Design observer gains $L_x, L_d$ in \eqref{eq:estimator} (e.g., Kalman filter)
    \State Choose $Q \succeq 0$, $R \succ 0$, with $(A,Q)$ detectable
    \State Compute $P \succ 0$ using LQR    
\end{algorithmic}
\label{alg:offline}
\end{algorithm}
The following algorithm summarizes the closed-loop control of \eqref{eq:plant} with the observer \eqref{eq:estimator} and MPC \eqref{eq:ref_sel}, \eqref{eq:LDOMPC}.

\begin{algorithm}[H]
\caption{$\Pi$-MPC (Online Operation)}
At $t=0$: Initialize $\hat{x}(0), \bdhat(0)$
\begin{algorithmic}
\For{each time $t \in \Z_{\geq 0}$}
\State Compute targets $\bxbar(t), \bubar(t)$ according to~\eqref{eq:ref_sel}%
\State Solve Problem~\eqref{eq:LDOMPC} and apply $u(t) = u^\star_0$
\State Measure $\yf(t)$ and estimate $\hat{x}(t+1)$, $\bdhat(t+1)$ using~\eqref{eq:estimator}
\EndFor
\end{algorithmic}
\label{alg:online}
\end{algorithm}
The considered problem could also be addressed with the disturbance observer design from \cite{maederOffsetfreeReferenceTracking2010}, which decomposes general linear signals using eigenmodes. Naively applying eigenmode decomposition methods to periodic problems fails to account for sparsity, thus complicating observer design and MPC target calculations as complexity scales with period length. In contrast, the proposed periodic disturbance observer provides a simple and efficient MPC implementation through sequential disturbances in time. %

\subsection{Convergence analysis}

The following theorem provides the main theoretical result of this paper, showing that the closed-loop system resulting from Alg.~\ref{alg:online} asymptotically achieves zero tracking error.

\begin{theorem}
\label{thm:offsetfree}
Assume that the MPC problem \eqref{eq:LDOMPC} is feasible for all $t \in \Z_{\geq 0}$, the constraints~\eqref{xconst}-\eqref{uconst} are inactive %
after some time $t \geq j$ with $j \in \Z_{\geq 0}$, and the closed-loop system in Alg.~\ref{alg:online} converges to a periodic trajectory denoted by $\bu(t), \byf(t)$.
Then zero tracking error is achieved asymptotically, i.e., $\|z(t) - r(t)\| \to 0$ as $t\rightarrow \infty$.
\end{theorem}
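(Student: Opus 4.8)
The plan is to reduce the tracking error $z(t)-r(t)$ to a deviation between the observer's state estimate and the MPC target, and then to show that this deviation vanishes because, with the constraints inactive, the MPC acts as a stabilizing LQR feedback in deviation coordinates. First I would invoke Prop.~\ref{prop:steadyState}: since by hypothesis the closed loop converges to a periodic trajectory $\bu(t),\byf(t)$, the input and output are asymptotically $N$-periodic, so the observer converges to periodic $\bxhat,\bdhat$ satisfying \eqref{eq:ObsSSR}. Reading off the second block row of \eqref{eq:ObsSSR} at the current index gives, in the limit, $\yf(t)=C\xhat(t)+\Cd\dhat_0(t)$, hence $z(t)=H\yf(t)=HC\xhat(t)+H\Cd\dhat_0(t)$. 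Meanwhile the second block row of the target equation \eqref{eq:ref_sel} at the current index gives $HC\xbar_0(t)+H\Cd\dhat_0(t)=r(t)$. Subtracting, the $H\Cd\dhat_0(t)$ terms cancel and
\[
z(t)-r(t)=HC\big(\xhat(t)-\xbar_0(t)\big),
\]
so it suffices to prove that $\tilde{x}(t):=\xhat(t)-\xbar_0(t)\to 0$.

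Next I would extract a recursion for the target itself. The first block row of \eqref{eq:ref_sel} yields $\xbar_1(t)=A\xbar_0(t)+B\ubar_0(t)+\Bd\dhat_0(t)$. In the periodic limit the data driving \eqref{eq:ref_sel} shift cyclically by one step ($\bdhat(t+1)=\Sd\bdhat(t)$ since the estimation error vanishes, and the lifted reference shifts because $r(t+N)=r(t)$), while the coefficient matrix in \eqref{eq:ref_sel} commutes with the block-cyclic permutation $\Sx$. Using the uniqueness of the target guaranteed by \eqref{eq:wellposed} (minimal-norm in $u$), this forces the target trajectory to be shift-invariant, $\xbar_0(t+1)=\xbar_1(t)$, and therefore $\xbar_0(t+1)=A\xbar_0(t)+B\ubar_0(t)+\Bd\dhat_0(t)$.

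I would then analyze the MPC. Because the constraints are inactive for $t\geq j$, the change of variables $\delta x_k=x_k-\xbar_k(t)$, $\delta u_k=u_k-\ubar_k(t)$ turns \eqref{eq:LDOMPC} into an unconstrained finite-horizon LQR: the $\Bd\dhat_k(t)$ terms cancel between the prediction model and the target dynamics (valid for $k\le L-1$ since $L<N$), leaving $\delta x_{k+1}=A\delta x_k+B\delta u_k$ with $\delta x_0=\xhat(t)-\xbar_0(t)$, stage weights $Q,R$, and terminal cost $P$. Since $P$ is the LQR cost-to-go solving the discrete-time algebraic Riccati equation, the optimal first move is the stationary feedback $\delta u_0^\star=-K\tilde{x}(t)$, i.e.\ $u(t)=\ubar_0(t)-K\tilde{x}(t)$, where $A-BK$ is Schur stable by controllability of $(A,B)$ and detectability of $(A,Q)$. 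Finally, writing the observer state update as $\xhat(t+1)=A\xhat(t)+\Bd\dhat_0(t)+Bu(t)+L_x e(t)$ with $e(t)\to 0$ (from Prop.~\ref{prop:steadyState}), subtracting the target recursion, and inserting the feedback gives $\tilde{x}(t+1)=(A-BK)\tilde{x}(t)+L_x e(t)$; since $A-BK$ is Schur and $e(t)\to0$, we conclude $\tilde{x}(t)\to0$, and hence $z(t)-r(t)=HC\tilde{x}(t)\to0$.

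The hard part will be the third and fourth steps: one must justify rigorously that the inactive-constraint MPC collapses exactly to the stationary LQR gain at the first step (which needs $P$ to be the exact Riccati solution so the cost-to-go is stationary across the horizon), and then handle the fact that $e(t)$ vanishes only asymptotically rather than exactly when propagating $\tilde{x}$ — either via a standard input-to-state argument for $\tilde{x}(t+1)=(A-BK)\tilde{x}(t)+L_x e(t)$, or by passing to the exact periodic limit where $e\equiv0$ and noting the only $N$-periodic solution of $\tilde{x}(t+1)=(A-BK)\tilde{x}(t)$ is $\tilde{x}\equiv0$. The shift-invariance of the targets in the second step is the other delicate point, and it rests entirely on the uniqueness furnished by \eqref{eq:wellposed}.
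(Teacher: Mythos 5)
Your proposal is correct, and it shares the paper's skeleton (Prop.~\ref{prop:steadyState} giving \eqref{eq:ObsSSR}, the target equation \eqref{eq:ref_sel}, and the inactive-constraint reduction of \eqref{eq:LDOMPC} to an unconstrained LQR with stationary gain $K$), but your endgame is genuinely different. The paper stays in the lifted space: it subtracts \eqref{eq:ref_sel} from \eqref{eq:ObsSSR} to obtain the lifted error equation \eqref{eq:errorSSR}, substitutes the lifted feedback $\bu(t)-\bubar(t)=K_N(\bxhat(t)-\bxbar(t))$ into its top row, and concludes $\bxhat(t)-\bxbar(t)=0$ from invertibility of $(A+BK)_N-\Sx$ (Schur stability keeps $\sigma(A+BK)$ off the unit circle, where the eigenvalues of the block-cyclic shift live); the bottom row then gives zero tracking error. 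You instead reduce the error pointwise in time to $HC(\xhat(t)-\xbar_0(t))$, derive a closed-loop recursion $\tilde{x}(t+1)=(A-BK)\tilde{x}(t)+L_xe(t)$, and finish with an input-to-state/vanishing-perturbation argument. Each route buys something: the paper's is more compact and works purely algebraically in the exact periodic limit, while yours is more elementary (no lifted invertibility needed) and handles the transient explicitly. Notably, your ``delicate'' second step --- shift-invariance of the targets, $\bxbar(t+1)=\Sx\bxbar(t)$ in the limit, justified by commutation of the coefficient matrix in \eqref{eq:ref_sel} with the block-cyclic shift plus uniqueness/minimal-norm selection under \eqref{eq:wellposed} --- is not wasted effort: the paper's own substitution $\bu-\bubar=K_N(\bxhat-\bxbar)$ implicitly requires the same property, since the MPC law at time $t+k$ is expressed relative to $\xbar_0(t+k),\ubar_0(t+k)$ rather than $\xbar_k(t),\ubar_k(t)$, and the paper does not spell this out. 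In that sense your proof makes rigorous a step the paper glosses over, at the cost of extra machinery.
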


\begin{proof}
The following proof extends the arguments in \cite[Thm. 1]{maederLinearOffsetfreeModel2009} to the periodic reference tracking case.

As $t \to \infty$, Prop.~\ref{prop:steadyState} implies that the $N$-periodic sequences $\bu$, $\byf$, $\bdhat$, and $\bxhat$ satisfy \eqref{eq:ObsSSR}. 
By definition, the targets $\bxbar$ and $\bubar$ satisfy \eqref{eq:ref_sel}.
Left multiplying the second row of \eqref{eq:ObsSSR} by $H_N$, and then subtracting \eqref{eq:ref_sel} from \eqref{eq:ObsSSR} we obtain
\begin{align} \label{eq:errorSSR}
    \begin{bmatrix}
        A_N - \Sx & B_N \\
        H_N C_N & 0 
    \end{bmatrix}
    \begin{bmatrix}
        \bxhat(t) - \bxbar(t) \\
        \bu(t) - \bubar(t) 
    \end{bmatrix} &= \begin{bmatrix}
        0 \\
        H_N \byf(t)  - \mathbf{r}(t)
    \end{bmatrix}.
\end{align}

Consider a change of variables in the MPC problem \eqref{eq:LDOMPC} to $\delta x_k = x_k - \xbar_k(t)$ and $\delta u_k = u_k - \ubar_k(t)$. Since the constraints are inactive for $t \geq j$, the MPC \eqref{eq:LDOMPC} is equivalent to
\begin{align*}
\min_{\delta u_0, \ldots, \delta u_{L-1}} \, & 
\| \delta x_L \|_P^2 +
\sum_{k=0}^{L-1} \| \delta x_{k} \|_Q^2 + \| \delta u_{k} \|_R^2 \nonumber\\
\text{s.t.} \quad & \delta x_{k+1} = A \delta x_{k} + B \delta u_{k}, \quad k \in \Z_{[0,L-1]}, \nonumber\\
& \delta x_0 = \xhat(t) - \xbar_{0}(t).
\end{align*}

Since the terminal cost $P$ is chosen based on the LQR, the optimal input is given by the unconstrained optimal LQR, i.e., $\delta u_0^\star(t) = K \delta x(t) = K(\xhat(t) - \xbar_{0}(t))$. Furthermore, given $(A,B)$ stabilizable and $(A,Q)$ detectable, $A+BK$ is (Schur) stable. Thus, as $t\to \infty$, 
the top row of \eqref{eq:errorSSR} implies %
\begin{align*}
\left( (A + B K)_N - \Sx  \right) \left( \bxhat(t) - \bxbar(t) \right) &= 0,
\end{align*}
where the left matrix is invertible by stability of $A + B K$.
Finally, since $\bxhat(t) - \bxbar(t) = 0$, the bottom row of \eqref{eq:errorSSR} yields 
$$\lim_{t\rightarrow\infty}\|Hy(t)-r(t)\|=\lim_{t\rightarrow\infty}\|z(t)-r(t)\|=0. \qedhere$$

\end{proof}

\begin{figure*}[t]
  \centering
  \includegraphics[width=0.85\textwidth]{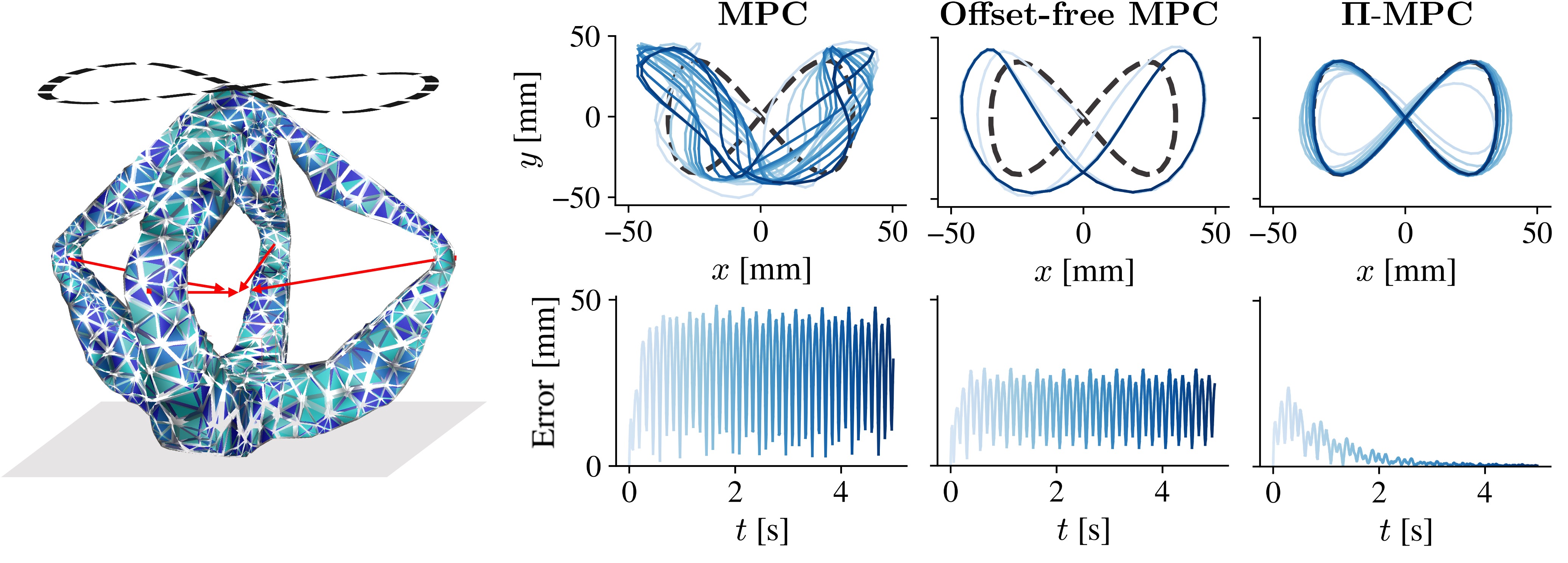}
  \caption{Left: Picture of the Diamond robot mesh. The reference figure-eight trajectory for the tip is shown with a dashed black line. The red arrows indicate the actuator inputs, i.e., applied forces at the elbows. Right: Simulation results illustrate tracking performance over ten periods for a high-frequency, 2D figure-eight trajectory. We compare a standard MPC scheme (left), offset-free MPC (center), and our proposed MPC with a periodic disturbance observer ($\Pi$-MPC) (right), all shown in blue. The bottom plots show the tracking error over time. The shading indicates time progression, with lighter shades representing earlier portions of the trajectory. The dashed black line represents the reference trajectory.}
  \vspace{-0.5cm}
  \label{fig:softrobot}
\end{figure*}

\section{Implementation and convergence for nonlinear MPC} \label{sec-nonlin}
In the following, we discuss how to generalize the presented design and analysis to more practical nonlinear MPC problems. 
Specifically, we discuss convergence guarantees with general nonlinear disturbance observers (Sec.~\ref{sec-nonlin_analysis}), a nonlinear MPC implementation which does not require the explicit computation of the targets $\bxbar$, $\bubar$ (Sec.~\ref{sec-nonlin-MPC}), and simple designs in case of state measurement (Sec.~\ref{sec-nonlin-design}). 
\subsection{Nonlinear disturbance observer - convergence analysis}
\label{sec-nonlin_analysis}
The augmented linear model~\eqref{eq:LDO} is generalized to a nonlinear model
\begin{align} \label{eq:NDO}
\begin{split} 
    \begin{bmatrix}
    x(t+1) \\
    \bd(t+1)
    \end{bmatrix} &= 
    \begin{bmatrix}
    f(x(t),u(t),\sel \bd(t))\\
    \Sd \bd(t)
    \end{bmatrix},\\ 
    y(t) &= h(x(t),\sel \bd(t)).
\end{split}
\end{align}
The observer~\eqref{eq:estimator} generalizes to
\begin{align} \label{eq:estimator_nonlinear}
\begin{split}
\begin{bmatrix}
\xhat(t+1)\\
\bdhat(t+1)
\end{bmatrix} =&
\begin{bmatrix}
    f(\xhat(t),u(t),\sel \bdhat(t))\\
    \Sd \bdhat(t)
    \end{bmatrix},\\  
 &+ \ell( \yf(t), \xhat(t),\sel \bdhat(t) ).
\end{split}
\end{align}
The analysis of this nonlinear periodic disturbance model is based on a combination of the arguments for nonlinear disturbance observers in~\cite{morariNonlinearOffsetfreeModel2012} and the proposed periodic disturbance observer (Sec.~\ref{sec:periodic_observer}--\ref{sec:MPC}). 
Similar to Prop.~\ref{prop:steadyState}, if a stable observer is designed and the input and output trajectories converge to a periodic trajectory, then the converged estimates satisfy $\yf(t)=h(\xhat(t),\dhat_0(t))$ (cf.~\cite[Thm.~4]{morariNonlinearOffsetfreeModel2012}).
Then, asymptotic tracking of the reference can be ensured if the MPC design ensures $\lim_{t\rightarrow\infty}\|H h(\xhat(t),\dhat_0(t)))-r(t)\|=0$ whenever the prediction model is exact.

\subsection{Nonlinear MPC design}
\label{sec-nonlin-MPC}
Application of the MPC scheme (Alg.~\ref{alg:online}) with the nonlinear model~\eqref{eq:NDO} is complicated by two factors:
(i) computing a periodic target $\bxbar,\bubar$ \eqref{eq:ref_sel} is computationally expensive;
(ii) relating the MPC scheme to an LQR or designing a suitable terminal penalty $P$ becomes non-trivial. 
Instead, the following MPC formulation from~\cite{kohlerConstrainedNonlinearOutput2022} provides a simple solution: 
\begin{align}
\label{eq:MPC_alternative_nonlinear}
\begin{split}
\min_{u_0, \ldots, u_{L-1}} \quad & \sum_{k=0}^{L-1} \| z_{k} - r_{k} \|_{Q_z}^2 + \| u_{k} - u_{k-N} \|_R^2 \\
\text{s.t.} \quad & x_{k+1} = f(x_{k},u_{k},\dhat_k (t)), \quad x_0 = \hat{x} (t), \\ %
& y_{k} = h(x_k,\dhat_k (t)),\\
& z_{k} = H y_k, \quad k \in \Z_{[0,L-1]},\\
& x_{k} \in \mathcal{X}, \quad k \in \Z_{[1,L]}, \\
& u_{k} \in \mathcal{U}, \quad k \in \Z_{[0,L-1]},
\end{split}
\end{align}
with $Q_z$ positive definite. This formulation directly minimizes the error with respect to the reference and regularizes the input by penalizing non-periodicity. Under suitable technical conditions (involving stabilizability, detectability, and non-resonance), this MPC scheme satisfies the desired tracking properties, i.e., the reference is asymptotically tracked when the prediction model is exact if the horizon $L$ is chosen sufficiently large, see~\cite[Sec.~IV]{kohlerConstrainedNonlinearOutput2022} for details.

\subsection{Nonlinear observer design}
\label{sec-nonlin-design}
The design of nonlinear observers with guaranteed stability is generally challenging, but promising results can often be obtained using a simple extended Kalman filter (EKF).
In the special case of full state measurements $y(t)=x(t)$, a simple linear observer can be designed for an additive disturbance model $x(t+1)=f(x(t),u(t))+\sel \bd(t)$. 
The disturbance observer is given by
\begin{align}
\label{eq:nonlinear_observer_simple}
\begin{split}
&\bdhat(t+1)=S_d\bdhat(t)\\
&+\sel^\top  L_d(f(x(t),u(t))+ \dhat_0(t) -x(t+1)),
\end{split}
\end{align}
which essentially corresponds to an update of $\hat{d}_0(t)$ based on the difference between the prediction and the new measured state. %
Here, the matrix $L_d$ should be chosen such that $I+L_d$ is Schur stable, e.g., $L_d=-\lambda I_{\nd}$, $\lambda\in(0,1)$.

Overall, the design from Sec.~\ref{sec:periodic_observer}--\ref{sec:MPC} is naturally extended to nonlinear MPC while maintaining the simplicity and theoretical guarantees for asymptotically perfect tracking.

\section{Experiments} \label{sec:experiments}
We demonstrate our approach's broad applicability on two challenging robotic systems with significant model mismatch, leveraging the MPC outlined in Sec.~\ref{sec-nonlin-MPC}. First, we validate the $\Pi$-MPC scheme in simulation on an underactuated soft robot with nearly 10,000 degrees of freedom, where we use a simple linear 6-dimensional prediction model. Next, we apply our method to a real-world miniature race car and show its ability to achieve near-perfect tracking of a given reference with a simple kinematic bicycle model.

\begin{figure*}[t]
  \centering
  \includegraphics[width=0.9\textwidth]{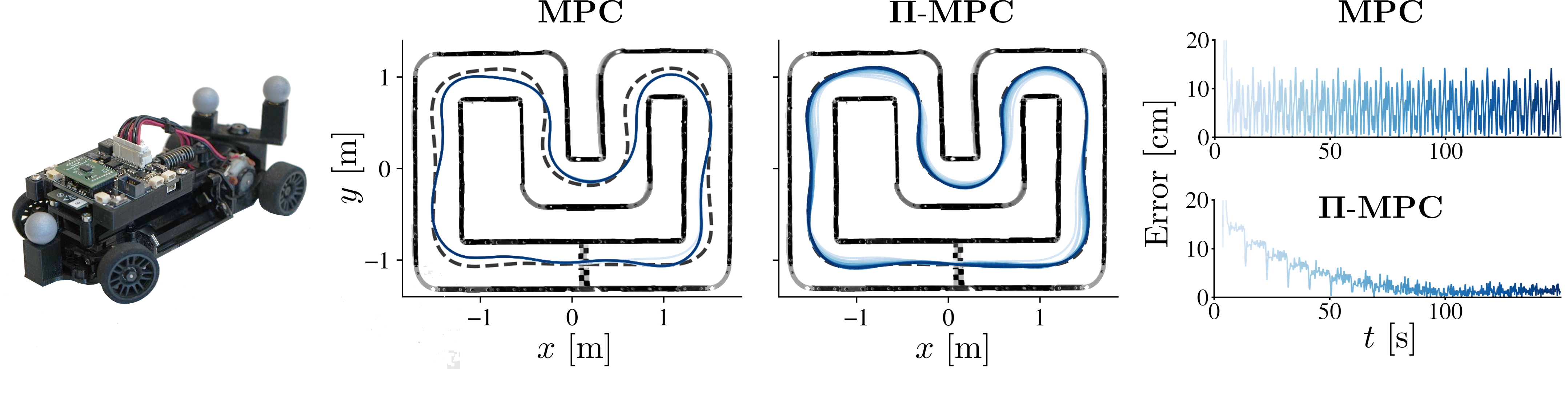}
  \caption{%
  Experimental results illustrate tracking with a race car. We compare a standard MPC scheme and the same MPC with the proposed periodic disturbance observer ($\Pi$-MPC), all shown in blue. The shading indicates time progression, with lighter shades representing earlier portions of the trajectory. The dashed black line represents the reference trajectory.}
  \vspace{-0.5cm}
  \label{fig:car_big}
\end{figure*}

\subsection{Soft robot finite element simulation}

We now apply our approach in simulation to the `Diamond' soft robot (shown in Fig.~\ref{fig:softrobot}). We compare:
\begin{enumerate}
    \item an MPC scheme using only the nominal model,
    \item the same MPC equipped with a constant disturbance observer (Offset-free MPC, OF-MPC \cite{badgwellDisturbanceModelDesign2002, pannocchia2003, maederLinearOffsetfreeModel2009, pannocchiaOffsetfreeMPCExplained2015}),
    \item the same MPC with the proposed periodic disturbance observer ($\Pi$-MPC).
\end{enumerate}

We demonstrate that $\Pi$-MPC asymptotically eliminates tracking error, achieving perfect tracking on a challenging, high-frequency periodic control task.

\begin{table}[t]
\centering
\caption{Quantitative Evaluation of Tracking Errors in Soft-Robot Sim.}
\label{tab:softrobot}
\begin{tabular}{l@{\hskip 3pt}p{1.3cm}p{1.3cm}p{1.3cm}p{1.3cm}}
\hline
& \multicolumn{2}{c}{\textbf{10\textsuperscript{th} Period}} & \multicolumn{2}{c}{\textbf{50\textsuperscript{th} Period}} \\
\cline{2-5}
 & \textbf{Avg.} & \textbf{Max.} & \textbf{Avg.} & \textbf{Max.} \\
\hline
\textbf{MPC}    & 30.5 \si{mm}  & 46.8 \si{mm} & 30.0 \si{mm} & 47.4 \si{mm} \\
\textbf{OF-MPC} & 18.6 \si{mm} & 28.7 \si{mm} & 18.6 \si{mm} & 28.7 \si{mm} \\
\textbf{$\mathbf{\Pi}$-MPC} & \textbf{0.26} \si{mm} & \textbf{0.52} \si{mm} & \textbf{0.004}$\,$\si{mm} & \textbf{0.008}$\,$\si{mm} \\
\hline
\end{tabular}
\vspace*{-1.6\baselineskip}
\end{table}

We conduct simulations through the SOFA finite-element-based physics simulator \cite{allard2007sofa}. The Diamond robot mesh is available in the \textit{SoftRobots} plugin \cite{coevoet2017software}. The robot features four actuators, shown in red in Fig.~\ref{fig:softrobot}, each pulling at an elbow. The robot's physical parameters match those reported in \cite{alora2023data}. The output measurement, $y^\text{f}$, is the $xyz$ position of the robot's tip at time $k$ and at time $k-1$. Following the framework outlined in \cite{alora2023data}, we regress a continuous-time linear model by collecting decaying trajectories and identifying a low-dimensional subspace with latent coordinates $x(t)\in\mathbb{R}^6$.

For MPC, we discretize the model with a time step of $dt = 0.01$ \si{s}. To achieve real-time control, we select a prediction horizon of $L = 15$ steps. This configuration results in solve times of approximately 3 \si{ms} across all MPC schemes. Since the learned model's $A$ matrix has no eigenvalues in common with $\Sd$, we choose a disturbance model with $\Bd=0$, $\Cd=I$, cf. Remark \ref{rmk:commonDO}. Finally, the observer gains in \eqref{eq:estimator} are calculated using a Kalman filter, with resulting magnitudes of closed-loop eigenvalues between $0.68$ and $0.98$.

The control task has the robot's end effector tracing a figure-eight along the $xy$--plane, with freedom in the $z$-direction. The figure-eight trajectory has an amplitude of $35$ \si{mm} and a frequency of $2$ \si{Hz}, corresponding to a period of $N = 50$. No random noise is added to the simulation. %

Fig.~\ref{fig:softrobot} illustrates the superior tracking performance of our method relative to the baseline methods across ten periods, with quantitative results detailed in Table~\ref{tab:softrobot}. The standard MPC scheme exhibits poor closed-loop tracking performance and fails to track the desired high-frequency trajectory, as the learned linear model does not give accurate predictions of the full nonlinear FEM system. The offset-free disturbance observer improves performance but still exhibits large tracking errors as it tries to estimate a constant disturbance, despite the model mismatch being time-varying. Instead, our approach properly considers the disturbances at each point throughout the trajectory. As expected from the presented theory, $\Pi$-MPC ensures that the tracking error decays to zero asymptotically despite significant model discrepancy. In fact, after 50 periods, the peak error reduces below $1 \times 10^{-2}$ \si{mm}.

\subsection{Miniature race car experiments}

In the following, we showcase the practicality of the proposed approach in realistic conditions through hardware experiments -- see Fig.~\ref{fig:titlefig}B.
The experiments were conducted on a miniature RC car  (scale 1:28) in
combination with the CRS software framework; for details on the overall code framework and the involved hardware see~\cite{carron2023chronos}. 
The MPC uses a simple kinematic bicycle model for the car~\cite{rajamani2011vehicle}:
\begin{align*}
\dot{p}_{\mathrm{x}}=&v\cos(\psi+\beta),\quad 
\dot{p}_{\mathrm{y}}=v\sin(\psi+\beta)\\
\dot{\psi}=&v/l_{\mathrm{r}} \sin(\beta),\quad \dot{v}=a,\quad 
\beta=\arctan\left(\frac{l_{\mathrm{r}}}{l_{\mathrm{r}}+l_{\mathrm{f}}} \tan(\delta)\right)\\
x=&[p_{\mathrm{x}},p_{\mathrm{y}},\psi,v]^\top\quad 
u=[\delta,a]^\top,\quad z=[p_{\mathrm{x}},p_{\mathrm{y}}]^\top,
\end{align*}
where $p_{\mathrm{x}/\mathrm{y}}$ are the positions, $\psi$ is the heading angle, $v$ the  velocity, $\beta$ the slip, $\delta$  the steering angle, and $a$ the acceleration. The state $x$ is measured using a Qualiysis motion capture system.
As discussed in Sec.~\ref{sec-nonlin}, we use the observer~\eqref{eq:nonlinear_observer_simple}, and the design consists of choosing a gain matrix $L_d\in\mathbb{R}^{4\times 4}$ that determines convergence speed. Experiments are conducted with $L_d = -\mathrm{diag}(0.1,0.1,0.2,0.2)$.%

The periodic reference is chosen as a physically feasible trajectory on the racetrack based on past experiments.
When solving~\eqref{eq:MPC_alternative_nonlinear}, we penalize non-periodicity of $du/dt$ instead of $u$ to yield smoother operation. 
The MPC problem \eqref{eq:MPC_alternative_nonlinear} is solved online using \textit{acados}~\cite{verschueren2022acados}.
The overall implementation considers a prediction horizon of $L=40$, a period length of $N=231$, and a sampling period of $40$~\si{ms}. 

In the experiments, we compare a na\"ive MPC implementation, using only the model, with the proposed $\Pi$-MPC approach, which additionally uses the periodic disturbance observer. 
The experimental results are illustrated in Fig.~\ref{fig:car_big}. Quantitative results are summarized in Table \ref{tab:racecar}.
Both MPC formulations provide identical results in the first lap. After ten laps, $\Pi$-MPC has an average error approximately 4 times lower than the MPC baseline. Over the course of sixteen laps, the na\"ive MPC implementation continues to show peak errors over $14$ \si{cm}, while the proposed formulation reduces the peak tracking error below $3$ \si{cm}. Although the theory suggests convergence to zero error, the presence of non-deterministic effects, such as noise and delays, may result in small fluctuations.

\begin{table}[t]
\centering
\caption{Quantitative Evaluation of Tracking Errors in Race Car Exp.}
\label{tab:racecar}
\begin{tabular}{lcccc}
\hline
& \multicolumn{2}{c}{\textbf{5\textsuperscript{th} Lap}} & \multicolumn{2}{c}{\textbf{16\textsuperscript{th} Lap}} \\
\cline{2-5}
 & \textbf{Avg.} & \textbf{Max.} & \textbf{Avg.} & \textbf{Max.} \\
\hline
\textbf{MPC} & 6.19 \si{cm} & 14.21 \si{cm} & 6.15 \si{cm} & 14.23 \si{cm} \\
\textbf{$\mathbf{\Pi}$-MPC} & \textbf{5.54} \si{cm} & \textbf{8.04} \si{cm} & \textbf{1.42} \si{cm} & \textbf{2.91} \si{cm} \\
\hline
\end{tabular}
\vspace*{-1.6\baselineskip}
\end{table}

Overall, the baseline MPC exhibits significant tracking errors and oscillations caused by the model mismatch. 
In contrast, the proposed formulation achieves almost perfect tracking after a few periods. 
Notably, the proposed approach has minimal design complexity and is implemented in a modular way in addition to an existing MPC implementation.

\section{Conclusion}
\label{sec:conclusion}
Our work shows that including a periodic disturbance observer in MPC is a simple and effective method to remove tracking errors for periodic references. Specifically, we have shown that the proposed $\Pi$-MPC is
\begin{itemize}
    \item easily implemented on top of an existing MPC scheme with both linear or nonlinear models,
    \item characterized with theoretical guarantees, and
    \item validated numerically and experimentally to achieve minimal tracking errors, even with significant model-system mismatches.
\end{itemize}
Non-periodic problems are left for future work.

\section*{Appendix}
\begin{proof} [Proof of Prop.~\ref{prop:augObs}]
    This proof extends the results in \cite[Prop.~1]{maederLinearOffsetfreeModel2009} to periodic disturbances.
    From the Hautus observability condition \cite[p.~272]{Sontag_1998}, the observability of system \eqref{eq:LDO} is equivalent to 
    \begin{equation} \label{eq:augRank}
    \operatorname{rank}
    \begin{bmatrix}   
    A-\lambda I_{n_x} & \Bd \sel \\
    0 & \Sd - \lambda I_{\nd N} \\
    C & \Cd \sel
    \end{bmatrix} = n_x + \nd N,
    \end{equation}
    for all $\lambda \in \mathbb{C}$. 

    When $\lambda \notin \sigma(\Sd)$, we have that $\Sd - \lambda I_{\nd N}$ is full rank. %
    Furthermore, since $(A,C)$ is observable, the Hautus condition on \eqref{eq:nominal} implies $\operatorname{rank} \left( \begin{bmatrix}
    (A-\lambda I_{n_x})^\top & C^\top 
    \end{bmatrix}^\top \right) = n_x$. Thus, the left and right sides of the matrix contribute $n_x$ and $\nd N$ independent columns, respectively, and \eqref{eq:augRank} holds.

    When $\lambda \in \sigma(\Sd)$, we have $\lambda = \lambda_k=e^{i 2 \pi k / N}, \, k\in \mathbb{Z}_{[0, N - 1]}$. Since the geometric multiplicity of each $\lambda_k$ is $\nd$, the dimension of the null-space of $\Sd - \lambda_k I_{\nd N} $ is $\nd$. The rank-nullity theorem implies that $\operatorname{rank}\left(\Sd - \lambda_k I_{\nd N} \right)= \nd(N - 1)$. These $\nd(N - 1)$ columns are clearly independent from the left side of the matrix and can be removed from the Hautus condition \eqref{eq:augRank}, yielding
    $$\operatorname{rank} \begin{bmatrix}   
    A-\lambda_k I_{n_x} & \Bd \sel \\
    C & \Cd \sel
    \end{bmatrix} = n_x + \nd.$$
    Disregarding the additional zero columns introduced by multiplying $\Bd$ and $\Cd$ with $\sel$ yields the rank condition \eqref{eq:obsRank}. Thus, condition \eqref{eq:augRank} is equivalent to \eqref{eq:obsRank}. \qedhere 
\end{proof}

\begin{proposition} \label{prop:rankObs}
    Assume the observer \eqref{eq:estimator} is stable. Then the controllability matrix for the pair $(\Sd, L_d)$ is full row rank.
\end{proposition}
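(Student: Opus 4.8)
The plan is to prove controllability of the pair $(\Sd, L_d)$, since the stated controllability matrix has full row rank precisely when $(\Sd, L_d)$ is controllable. Rather than manipulate the controllability matrix directly, I would use the PBH (Hautus) test together with the stability of the closed-loop observer, arguing by contradiction. The whole idea is that an uncontrollable mode of $\Sd$ would survive, unchanged, as a unit-circle eigenvalue of the observer error dynamics, which is impossible for a stable (Schur) observer.

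First I would identify the error dynamics of \eqref{eq:estimator}. Collecting the augmented estimate $\hat\xi = \begin{bmatrix}\xhat^\top & \bdhat^\top\end{bmatrix}^\top$, the homogeneous part evolves under $\bar A + L \bar C$ with $\bar A = \begin{bmatrix} A & \Bd\sel \\ 0 & \Sd \end{bmatrix}$, $L = \begin{bmatrix} L_x^\top & L_d^\top \end{bmatrix}^\top$, and $\bar C = \begin{bmatrix} C & \Cd\sel \end{bmatrix}$, i.e.,
$$
A_{\mathrm{cl}} = \begin{bmatrix} A + L_x C & \Bd\sel + L_x\Cd\sel \\ L_d C & \Sd + L_d\Cd\sel \end{bmatrix}.
$$
Stability of the observer means $A_{\mathrm{cl}}$ is Schur, so every $\lambda \in \sigma(A_{\mathrm{cl}})$ satisfies $|\lambda| < 1$.

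Next I would assume, for contradiction, that $(\Sd, L_d)$ is not controllable. By the PBH test there is then a nonzero left eigenvector $w$ of $\Sd$, with $w^\ast \Sd = \lambda w^\ast$ for some $\lambda \in \sigma(\Sd)$, satisfying $w^\ast L_d = 0$. Because every eigenvalue of $\Sd$ lies on the unit circle (as noted after \eqref{eq:sd}), $|\lambda| = 1$. I would then test the row vector $\begin{bmatrix} 0^\top & w^\ast \end{bmatrix}$ against $A_{\mathrm{cl}}$: using $w^\ast L_d = 0$, the first block gives $w^\ast L_d C = 0$ and the second gives $w^\ast \Sd + w^\ast L_d \Cd\sel = \lambda w^\ast$, so that
$$
\begin{bmatrix} 0^\top & w^\ast \end{bmatrix} A_{\mathrm{cl}} = \lambda \begin{bmatrix} 0^\top & w^\ast \end{bmatrix}.
$$
Hence $\begin{bmatrix} 0^\top & w^\ast \end{bmatrix}$ is a left eigenvector of $A_{\mathrm{cl}}$ with $|\lambda| = 1$, contradicting that $A_{\mathrm{cl}}$ is Schur. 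Therefore $(\Sd, L_d)$ is controllable and its controllability matrix has full row rank.

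I do not anticipate a genuine obstacle; the argument is a one-line PBH/left-eigenvector computation. The main things to get right are the bookkeeping of the closed-loop matrix $A_{\mathrm{cl}}$ and the verification that the chosen left vector annihilates the $L_d C$ block, so the unit-circle mode of $\Sd$ is inherited by $A_{\mathrm{cl}}$. The one conceptual crux worth emphasizing is that $\sigma(\Sd)$ lies exactly on the unit circle: this is what makes an uncontrollable mode strictly incompatible with observer stability, thereby linking the assumed stability of \eqref{eq:estimator} to controllability of $(\Sd, L_d)$.
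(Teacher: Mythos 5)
Your proof is correct, and it takes a recognizably different route from the paper's. Both arguments rest on the same two ingredients---Schur stability of the closed-loop observer matrix $\bar A + L \bar C$ and the fact that $\sigma(\Sd)$ lies on the unit circle---and both ultimately appeal to the PBH criterion, but the mechanics differ. The paper argues directly: stability makes the full closed-loop matrix nonsingular at every $\lambda \in \sigma(\Sd)$ (since $|\lambda| = 1$), hence its bottom $\nd N$ rows $\begin{bmatrix} L_d C & \Sd - \lambda I_{\nd N} + L_d \Cd \sel \end{bmatrix}$ have full row rank; it then factors this block as $\begin{bmatrix} L_d & \Sd - \lambda I_{\nd N} \end{bmatrix}$ times an upper-triangular matrix and uses the standing assumption that $C$ has full row rank to discard that factor, arriving at the Hautus rank condition for $(\Sd, L_d)$. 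You instead take the dual, contrapositive route: an uncontrollable mode gives a left eigenvector $w$ of $\Sd$ with $w^\ast L_d = 0$, which you embed as the left eigenvector $\begin{bmatrix} 0^\top & w^\ast \end{bmatrix}$ of the closed-loop matrix at a unit-modulus eigenvalue, contradicting stability. These are two formulations of the same PBH fact (row-rank deficiency $\iff$ existence of a left annihilating vector), but your version buys something concrete: the cross term $w^\ast L_d C$ vanishes regardless of the structure of $C$, so you never need $C$ to have full row rank, whereas the paper's factorization step genuinely relies on it. The paper's direct version, in exchange, displays the explicit rank identity and makes visible exactly where that assumption on $C$ enters.
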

\begin{proof}
By stability of the observer \eqref{eq:estimator}, we have that
\begin{align*}
\det \left( \begin{bmatrix}
A - \lambda I_{n_x} + L_x C & (\Bd + L_x \Cd) \sel \\
L_d C & \Sd - \lambda I_{\nd N} + L_d \Cd \sel
\end{bmatrix} \right) \neq 0,
\end{align*}
for all unstable eigenvalues, $|\lambda| \geq 1$. Hence, for all $\lambda \in \sigma(\Sd)$, the bottom $\nd N$ rows must be full row rank, i.e.,
\begin{align*}
 \nd N &= \operatorname{rank} \left( \begin{bmatrix}
L_d C & \Sd - \lambda I_{\nd N} + L_d \Cd \sel
\end{bmatrix} \right),\\
&= \operatorname{rank} \left( \begin{bmatrix}
    L_d & \Sd - \lambda I_{\nd N} 
    \end{bmatrix} \begin{bmatrix} C & \Cd \sel \\ 0 & I_{\nd N} \end{bmatrix} 
     \right),\\
&= \operatorname{rank} \left( \begin{bmatrix}
L_d & \Sd - \lambda I_{\nd N} 
\end{bmatrix} \right).
\end{align*}
The last equality leverages the full row rank of $C$, ensuring the upper triangular matrix 
also has full row rank. The claim now follows from the Hautus Lemma \cite[Lemma 3.3.7]{Sontag_1998}. \qedhere
\end{proof}

\setstretch{0.95}
\bibliography{refs} 

\begin{thebibliography}{10}

\bibitem{rawlings2017model}
J.~Rawlings, D.~Mayne, and M.~Diehl, {\em Model Predictive Control: Theory, Computation, and Design}.
\newblock Nob Hill Publishing, 2017.

\bibitem{shi2019}
G.~Shi, X.~Shi, M.~O’Connell, R.~Yu, K.~Azizzadenesheli, A.~Anandkumar, Y.~Yue, and S.-J. Chung, ``Neural lander: Stable drone landing control using learned dynamics,'' in {\em Proc. International Conference on Robotics and Automation (ICRA)}, pp.~9784--9790, 2019.

\bibitem{kabzan2019}
J.~Kabzan, L.~Hewing, A.~Liniger, and M.~N. Zeilinger, ``Learning-based model predictive control for autonomous racing,'' {\em IEEE Robotics and Automation Letters}, vol.~4, no.~4, pp.~3363--3370, 2019.

\bibitem{torrente2021}
G.~Torrente, E.~Kaufmann, P.~Föhn, and D.~Scaramuzza, ``Data-driven {MPC} for quadrotors,'' {\em IEEE Robotics and Automation Letters}, vol.~6, no.~2, pp.~3769--3776, 2021.

\bibitem{holmes2006dynamics}
P.~Holmes, R.~J. Full, D.~Koditschek, and J.~Guckenheimer, ``The dynamics of legged locomotion: Models, analyses, and challenges,'' {\em SIAM Review}, vol.~48, no.~2, pp.~207--304, 2006.

\bibitem{cosner1990}
C.~Cosner, G.~Anwar, and M.~Tomizuka, ``Plug in repetitive control for industrial robotic manipulators,'' in {\em Proc. IEEE International Conference on Robotics and Automation}, pp.~1970--1975 vol.3, 1990.

\bibitem{romero2022}
A.~Romero, S.~Sun, P.~Foehn, and D.~Scaramuzza, ``Model predictive contouring control for time-optimal quadrotor flight,'' {\em IEEE Transactions on Robotics}, vol.~38, no.~6, pp.~3340--3356, 2022.

\bibitem{badgwellDisturbanceModelDesign2002}
T.~Badgwell and K.~Muske, ``Disturbance model design for linear model predictive control,'' in {\em Proc. American Control Conference}, vol.~2, pp.~1621--1626 vol.2, May 2002.

\bibitem{pannocchia2003}
G.~Pannocchia and J.~B. Rawlings, ``Disturbance models for offset-free model-predictive control,'' {\em AIChE Journal}, vol.~49, no.~2, pp.~426--437, 2003.

\bibitem{maederLinearOffsetfreeModel2009}
U.~Maeder, F.~Borrelli, and M.~Morari, ``Linear offset-free {{Model Predictive Control}},'' {\em Automatica}, vol.~45, pp.~2214--2222, Oct. 2009.

\bibitem{pannocchiaOffsetfreeMPCExplained2015}
G.~Pannocchia, M.~Gabiccini, and A.~Artoni, ``Offset-free {{MPC}} explained: Novelties, subtleties, and applications,'' {\em IFAC-PapersOnLine}, vol.~48, pp.~342--351, Jan. 2015.

\bibitem{ahn2007}
H.-S. Ahn, Y.~Chen, and K.~L. Moore, ``Iterative learning control: Brief survey and categorization,'' {\em IEEE Transactions on Systems, Man, and Cybernetics, Part C (Applications and Reviews)}, vol.~37, no.~6, pp.~1099--1121, 2007.

\bibitem{li2004}
L.~Cuiyan, Z.~Dongchun, and Z.~Xianyi, ``A survey of repetitive control,'' in {\em Proc. IEEE/RSJ International Conference on Intelligent Robots and Systems (IROS)}, vol.~2, pp.~1160--1166 vol.2, 2004.

\bibitem{wang2009}
Y.~Wang, F.~Gao, and F.~J. Doyle, ``Survey on iterative learning control, repetitive control, and run-to-run control,'' {\em Journal of Process Control}, vol.~19, no.~10, pp.~1589--1600, 2009.

\bibitem{francis1975}
B.~A. Francis and W.~M. Wonham, ``The internal model principle for linear multivariable regulators,'' {\em Applied Mathematics and Optimization}, vol.~2, pp.~170--194, June 1975.

\bibitem{lee2001}
J.~H. Lee, S.~Natarajan, and K.~S. Lee, ``A model-based predictive control approach to repetitive control of continuous processes with periodic operations,'' {\em Journal of Process Control}, vol.~11, no.~2, pp.~195--207, 2001.

\bibitem{cao2009}
R.~Cao and K.-S. Low, ``A {{Repetitive Model Predictive Control Approach}} for {{Precision Tracking}} of a {{Linear Motion System}},'' {\em IEEE Transactions on Industrial Electronics}, vol.~56, pp.~1955--1962, June 2009.

\bibitem{li2021performance}
M.~Li, T.~Yan, C.~Mao, L.~Wen, X.~Zhang, and T.~Huang, ``Performance-enhanced iterative learning control using a model-free disturbance observer,'' {\em IET Control Theory \& Applications}, vol.~15, no.~7, pp.~978--988, 2021.

\bibitem{carron2019data}
A.~Carron, E.~Arcari, M.~Wermelinger, L.~Hewing, M.~Hutter, and M.~N. Zeilinger, ``Data-driven model predictive control for trajectory tracking with a robotic arm,'' {\em IEEE Robotics and Automation Letters}, vol.~4, no.~4, pp.~3758--3765, 2019.

\bibitem{chen2022}
J.~Chen, Y.~Dang, and J.~Han, ``Offset-free model predictive control of a soft manipulator using the {{Koopman}} operator,'' {\em Mechatronics}, vol.~86, p.~102871, 2022.

\bibitem{maederOffsetfreeReferenceTracking2010}
U.~Maeder and M.~Morari, ``Offset-free reference tracking with model predictive control,'' {\em Automatica}, vol.~46, pp.~1469--1476, Sept. 2010.

\bibitem{mirasierra2023}
V.~Mirasierra and D.~Limon, ``Modifier-adaptation for real-time optimal periodic operation,'' {\em arXiv preprint arXiv:2309.09680}, 2023.

\bibitem{haddleton1994}
S.~W. Haddleton, ``Steady state performance of discrete linear time-invariant systems,'' Master's thesis, Rochester Institute of Technology, 1994.

\bibitem{davison1976}
E.~Davison, ``The robust control of a servomechanism problem for linear time-invariant multivariable systems,'' {\em IEEE Transactions on Automatic Control}, vol.~21, no.~1, pp.~25--34, 1976.

\bibitem{morariNonlinearOffsetfreeModel2012}
M.~Morari and U.~Maeder, ``Nonlinear offset-free model predictive control,'' {\em Automatica}, vol.~48, pp.~2059--2067, Sept. 2012.

\bibitem{kohlerConstrainedNonlinearOutput2022}
J.~K{\"o}hler, M.~A. M{\"u}ller, and F.~Allg{\"o}wer, ``Constrained nonlinear output regulation using model predictive control -- extended version,'' {\em IEEE Transactions on Automatic Control}, vol.~67, pp.~2419--2434, May 2022.

\bibitem{allard2007sofa}
J.~Allard, S.~Cotin, F.~Faure, P.-J. Bensoussan, F.~Poyer, C.~Duriez, H.~Delingette, and L.~Grisoni, ``{SOFA}-an open source framework for medical simulation,'' in {\em MMVR 15-Medicine Meets Virtual Reality}, vol.~125, pp.~13--18, IOP Press, 2007.

\bibitem{coevoet2017software}
E.~Coevoet, T.~Morales-Bieze, F.~Largilliere, Z.~Zhang, M.~Thieffry, M.~Sanz-Lopez, B.~Carrez, D.~Marchal, O.~Goury, J.~Dequidt, {\em et~al.}, ``Software toolkit for modeling, simulation, and control of soft robots,'' {\em Advanced Robotics}, vol.~31, no.~22, pp.~1208--1224, 2017.

\bibitem{alora2023data}
J.~I. Alora, M.~Cenedese, E.~Schmerling, G.~Haller, and M.~Pavone, ``Data-driven spectral submanifold reduction for nonlinear optimal control of high-dimensional robots,'' in {\em Proc. IEEE International Conference on Robotics and Automation (ICRA)}, pp.~2627--2633, IEEE, 2023.

\bibitem{carron2023chronos}
A.~Carron, S.~Bodmer, L.~Vogel, R.~Zurbr{\"u}gg, D.~Helm, R.~Rickenbach, S.~Muntwiler, J.~Sieber, and M.~N. Zeilinger, ``Chronos and {CRS}: Design of a miniature car-like robot and a software framework for single and multi-agent robotics and control,'' in {\em Proc. IEEE International Conference on Robotics and Automation (ICRA)}, pp.~1371--1378, IEEE, 2023.

\bibitem{rajamani2011vehicle}
R.~Rajamani, {\em Vehicle dynamics and control}.
\newblock Springer Science \& Business Media, 2011.

\bibitem{verschueren2022acados}
R.~Verschueren, G.~Frison, D.~Kouzoupis, J.~Frey, N.~v. Duijkeren, A.~Zanelli, B.~Novoselnik, T.~Albin, R.~Quirynen, and M.~Diehl, ``acados—a modular open-source framework for fast embedded optimal control,'' {\em Mathematical Programming Computation}, vol.~14, no.~1, pp.~147--183, 2022.

\bibitem{Sontag_1998}
E.~D. Sontag, {\em Mathematical Control Theory: Deterministic Finite Dimensional Systems}.
\newblock Texts in Applied Mathematics, New York: Springer, second~ed., 1998.

\end{thebibliography}
\bibliographystyle{ieeetr}

\end{document}